\newtheorem{theorem}{Theorem}
\title{Multi-Level Firing with Spiking DS-ResNet: Enabling Better and Deeper Directly-Trained Spiking Neural Networks}
\author{
Lang Feng$^1$
\and
Qianhui Liu$^{1}$\and
Huajin Tang$^{1,2}$\and
De Ma$^1$\And
Gang Pan$^{1,2,}$\footnote{Corresponding author.}
\affiliations
$^1$College of Computer Science and Technology, Zhejiang University, Hangzhou, China\\
$^2$Zhejiang Lab, Hangzhou, China
\emails
\{langfeng, qianhuiliu, htang, made, gpan\}@zju.edu.cn
}
\begin{document}
\maketitle
\begin{abstract}
Spiking neural networks (SNNs) are bio-inspired neural networks with asynchronous discrete and sparse characteristics, which have increasingly manifested their superiority in low energy consumption. Recent research is devoted to utilizing spatio-temporal information to directly train SNNs by backpropagation. However, the binary and non-differentiable properties of spike activities force directly trained SNNs to suffer from serious gradient vanishing and network degradation, which greatly limits the performance of directly trained SNNs and prevents them from going deeper. In this paper, we propose a multi-level firing (MLF) method based on the existing spatio-temporal back propagation (STBP) method, and spiking dormant-suppressed residual network (spiking DS-ResNet). MLF enables more efficient gradient propagation and the incremental expression ability of the neurons. Spiking DS-ResNet can efficiently perform identity mapping of discrete spikes, as well as provide a more suitable connection for gradient propagation in deep SNNs. With the proposed method, our model achieves superior performances on a non-neuromorphic dataset and two neuromorphic datasets with much fewer trainable parameters and demonstrates the great ability to combat the gradient vanishing and degradation problem in deep SNNs.
\end{abstract}

\section{Introduction}
Spiking neural networks (SNNs) are developed to realize brain-like information processing~\cite{Maass1997}, which use asynchronous binary spike signals to transmit information and have the ability to process information in both spatial domain (SD) and temporal domain (TD). Besides, the sparsity and event-driven properties of SNNs position them as potential candidates for the implementation of low energy consumption on dedicated neuromorphic hardware. As an example, the energy consumed by SNNs to transmit a spike on neuromorphic hardware is only nJ or pJ~\cite{DiehlCook2015}.

In terms of learning algorithms, existing unsupervised learning algorithms~\cite{Qi2018,Liu2020} are difficult to train deep SNNs. Currently, there are two main learning algorithms for deep SNNs training. One is ANN-SNN conversion learning~\cite{Sengupta2019,Yan2021,Hu2021}, which converts the pre-trained ANN model to the SNN model. Conversion learning can achieve deep SNNs training with competitive results, but it has to consume a large number of timesteps to ensure the coding resolution. Moreover, conversion learning cannot utilize the TD information, making it difficult to train neuromorphic datasets. The other is direct supervised learning~\cite{Wu2018,Gu2019,Liu2022,Zheng2021}, which is the approach taken by this paper. Direct supervised learning has great potential to make full use of spatio-temporal information to train the network and can reduce the demand for timesteps. However, to achieve more efficient direct supervised learning for better and deeper directly-trained SNNs, there are still two challenging issues to overcome.

The first is gradient vanishing. Due to non-differentiable spike activities, approximate derivative~\cite{Neftci2019} has to be adopted to make the gradient available, such as rectangle function and Gaussian cumulative distribution function~\cite{Wu2018}. However, it will raise a problem that the limited width of the approximate derivative causes membrane potentials of a multitude of neurons to fall into the saturation area, where the approximate derivative is zero or a tiny value. Furthermore, the sharp features that have larger values in the feature map cannot be further enhanced due to falling into the saturation area to the right of the approximate derivative caused by excessive membrane potential. This greatly limits the performance of deep SNNs, and the neurons located in this saturation area caused by excessive membrane potential are termed to be \emph{dormant} units in this paper. In the above cases, the gradient propagation will be blocked and unstable, therefore resulting in the gradient vanishing and increasing the difficulty of training deep SNNs.

The second is network degradation, which is terribly serious in deep directly-trained SNNs, even if residual structure \cite{He2016} is adopted. Therefore, existing training methods mainly expand SNNs in width to get improved performance, resulting in a large number of trainable parameters. The above non-differentiable spike activity is one of the reasons for network degradation, and the weak spatial expression ability of binary spike signals is another significant factor. For the widely used spiking neuron models like leaky integrate-and-fire (LIF) model, the sharp feature with a larger value and the non-sharp feature with a smaller value will have the same output in the forward process if the corresponding membrane potentials both exceed the firing threshold. As a result, the loss of information caused by discrete spikes will make residual structures hard to perform identity mapping.

We take steps to address these two challenges for enabling better and deeper directly-trained deep SNNs. We first propose the multi-level firing (MLF) method. MLF expands the non-zero area of the rectangular approximate derivatives by allocating the coverage of approximate derivative of each level. In this way, the membrane potentials of neurons are more likely to fall into the area where the derivative is not zero, so as to alleviate gradient vanishing. Besides, with the activation function of neurons in MLF generating spikes with different thresholds when activating the input, the expression ability of the neurons can be improved. Second, we propose spiking dormant-suppressed residual network (spiking DS-ResNet). Spiking DS-ResNet can efficiently perform identity mapping of discrete spikes as well as reduce the probability of dormant unit generation, making it more suitable for gradient propagation. To demonstrate the effectiveness of our work, we perform experiments on a non-neuromorphic dataset (CIFAR10) and neuromorphic datasets (DVS-Gesture, CIFAR10-DVS). Our model achieves state-of-the-art performances on all datasets with much fewer trainable parameters. Experimental analysis indicates that MLF effectively reduces the proportion of dormant units and improves the performances, and MLF with spiking DS-ResNet allows SNNs to go very deep without degradation.
\section{Related Work}
\textbf{Learning algorithm of deep SNNs}. For deep SNNs, there are two main learning algorithms to achieve competitive performance: (1) indirect supervised learning such as ANN-SNN conversion learning; (2) direct supervised learning, the gradient descent-based backpropagation method.

The purpose of ANN-SNN conversion learning is to make the SNNs have the same input-output mapping as the ANNs. Conversion learning avoids the problem of the weak expression ability of binary spike signals by approximating the spike sequence the real-valued output of ReLU, with which the inevitable conversion loss arises. A lot of works focus on reducing the conversion loss~\cite{Han2020,Yan2021} and achieve competitive performances. However, conversion learning ignores the effective TD information and needs a large number of timesteps to ensure accuracy. As a result, it is often limited to non-neuromorphic datasets and has a serious inference latency.

In recent years, direct supervised learning of SNNs has developed rapidly. From spatial back propagation~\cite{Lee2016} to spatial-temporal back propagation~\cite{Wu2018,Gu2019,Fang2020}, people have realized the utilization of spatial and temporal information for training. On this basis,~\cite{Zheng2021} realized the direct training of large-size networks and achieved state-of-the-art performance on the neuromorphic datasets. However, existing methods didn't solve the problem of the limited width of approximate derivative and weak expression ability of binary spike signals, which makes the direct training of deep SNNs inefficient. Gradient vanishing and network degradation seriously restrict directly-trained SNNs from going very deep, which is what we want to overcome.

\textbf{Gradient vanishing or explosion}. Gradient vanishing or explosion is the shared challenge of deep ANNs and deep SNNs. For deep ANNs, there are quite a few successful methods to address this problem. Batch normalization (BN)~\cite{Ioffe2015} reduces internal covariate shift to avoid gradient vanishing or explosion. The residual structure~\cite{He2016} makes the gradient propagate across layers by introducing shortcut connection, which is one of the most widely used basic blocks in deep learning.

For directly-trained deep SNNs, existing research on the gradient vanishing or explosion problem is limited. It is worth noting that the threshold-dependent batch normalization (tdBN) method proposed by~\cite{Zheng2021} can adjust the firing rate and avoid gradient vanishing or explosion to some extent, which is helpful for our further research on gradient vanishing. On this basis, we will combat the gradient vanishing problem in SD caused by the limited width of the approximate derivative.

\textbf{Deep network degradation}. Network degradation will result in a worse performance of deeper networks than that of shallower networks. For deep ANNs, one of the most successful methods to solve degradation problem is residual structure~\cite{He2016}. It introduces a shortcut connection to increase the identity mapping ability of the network and enable the networks to reach hundreds of layers without degradation greatly expanding the depth of the networks.

For directly-trained deep SNNs, there are few efforts on the degradation problem. Even if tdBN has explored the directly trained deep SNNs with residual structure and made SNNs go deeper, the degradation of deep SNNs is still serious. Our work will try to fill this gap in the field of SNNs.
\section{Preliminaries}
In this section, we review the spatio-temporal back propagation (STBP)~\cite{Wu2018} and the iterative LIF model~\cite{Wu2019} to introduce the foundation of our work.

STBP realizes error backpropagation in both TD and SD for the direct training of SNNs. On this basis,~\cite{Wu2019} develops the iterative LIF model into an easy-to-program version and accelerates the direct training of SNNs. Considering the fully connected network, the forward process of the iterative LIF model can be described as
\begin{small}
\begin{align}
\label{eq1}
x_{i}^{t+1,n}&=\sum_{j=1}^{l(n-1)}w_{ij}^{n}o_j^{t+1,n-1}+b_i^n, \\
\label{eq2}
u_i^{t+1,n}&=k_{\tau}u_i^{t,n}(1-o^{t,n}_i)+x_i^{t+1,n}, \\
\label{eq3}
o_i^{t+1,n}&=f(u_i^{t+1,n}-V_{th})=
\begin{cases}
1,& u_i^{t+1,n}\geq V_{th} \\
0,& u_i^{t+1,n}<V_{th}  \\
\end{cases},
\end{align}
\end{small}
where $k_{\tau}$ is a decay factor. $n$ and $l(n-1)$ denote the $n$-th layer and the number of neurons in the $(n-1)$-th layer respectively. $t$ is time index. $u_i^{t,n}$ and $o_i^{t,n}$ are the membrane potential and the output of the $i$-th neuron in the $n$-th layer at time $t$ respectively. $o_i^{t,n}\in(0,1)$ is generated by the activation function $f(\cdot)$, which is the step function. $V_{th}$ is the firing threshold. When the membrane potential exceeds the firing threshold, the neuron will fire a spike and the membrane potential is reset to zero. $w_{ij}^n$ is the synaptic weight from the $j$-th neuron in the $(n-1)$-th layer to the $i$-th neuron in the $n$-th layer and $b_i^n$ is the bias.

\section{Method}
\subsection{The MLF Method}
\subsubsection{The forward process}
\begin{figure}[t]
\centering
\includegraphics[width=0.87\columnwidth]{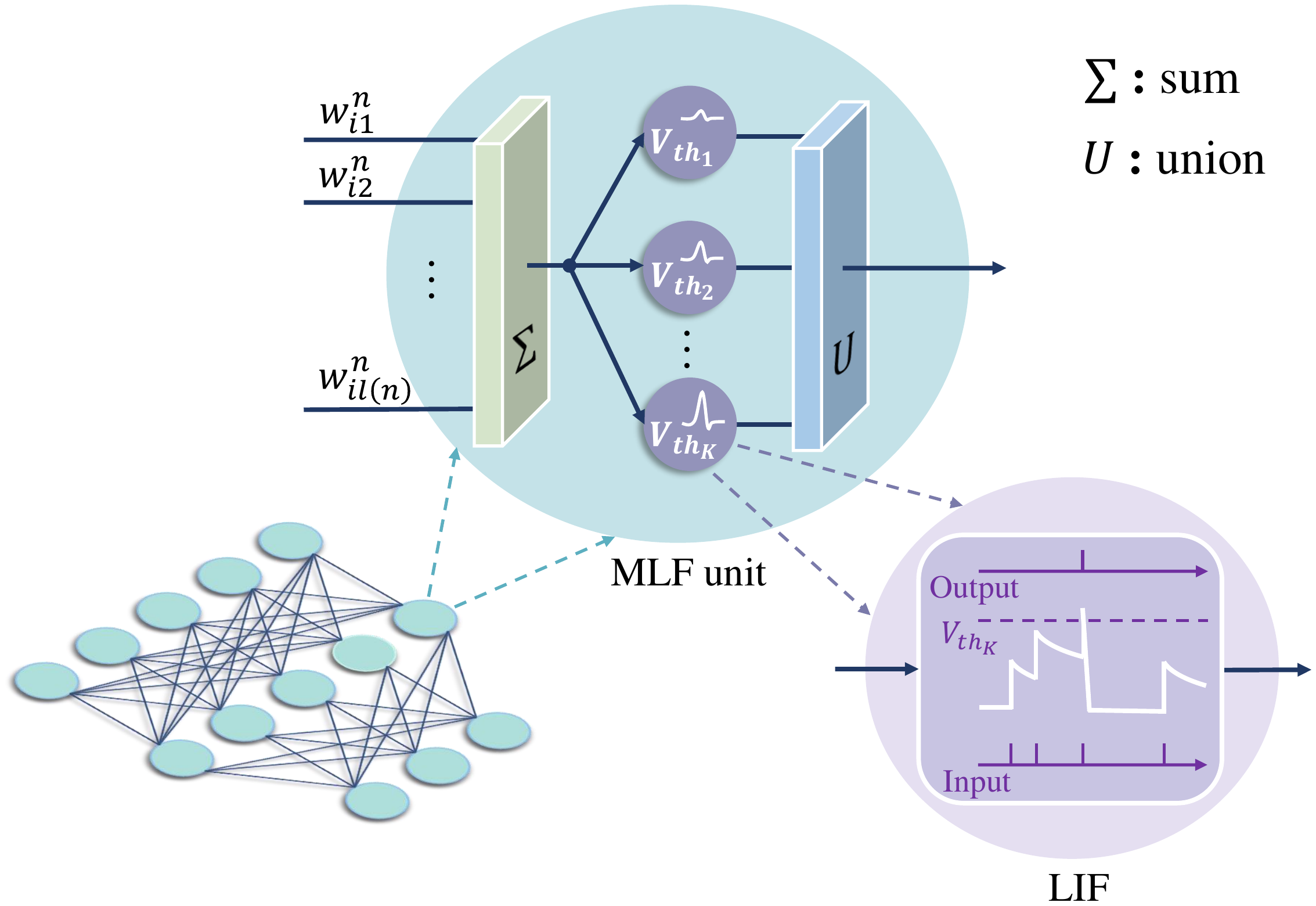} 
\caption{Illustration of MLF unit. A MLF unit contains multiple LIF neurons with different level thresholds. After receiving the input, these neurons will update the membrane potentials. Once the membrane potential of each level neuron reaches the corresponding threshold, a spike will be fired. The final output of MLF unit is the union of the spikes fired by all level neurons.}
\label{fig_MLF}
\end{figure}
As shown in Fig.~\ref{fig_MLF}, we replace LIF neurons with MLF units, which contain multiple LIF neurons with different level thresholds. The output is the union of all spikes fired by these neurons. The forward process can be described as
\begin{align}
\label{eq6}
\textbf{u}_i^{t+1,n}&=k_{\tau} \textbf{u}_i^{t,n}\odot(1-\textbf{o}^{t,n}_i)+x_i^{t+1,n}, \\
\label{eq7}
\textbf{o}_i^{t+1,n}&=f(\textbf{u}_i^{t+1,n}-\textbf{V}_{th}), \\
\label{eq8}
\hat{o}_i^{t+1,n}&=s(\textbf{o}_i^{t+1,n}),
\end{align}
where $\textbf{u}_i^{t,n}=(u_{i,1}^{t,n},u_{i,2}^{t,n},...,u_{i,k}^{t,n},...,u_{i,K}^{t,n})$ and  $\textbf{o}_i^{t,n}=(o_{i,1}^{t,n},o_{i,2}^{t,n},..,o_{i,k}^{t,n},...,o_{i,K}^{t,n})$ denote the membrane potential vector and the output vector of the $i$-th MLF unit in the $n$-th layer at time $t$ respectively. $\odot$ denotes the Hadamard product. $k$ and $K$ denote the $k$-th level and the number of levels respectively. $\textbf{V}_{th}=({V_{th}}_1,{V_{th}}_2,..,{V_{th}}_k,...,{V_{th}}_K)$ is the threshold vector.
To facilitate the calculation of pre-synaptic input $x_i^{t,n}$, we define a spike encoder as $s(\textbf{o}_i^{t,n})=o_{i,1}^{t,n}+o_{i,2}^{t,n}+...+o_{i,K}^{t,n}$, which is completely equivalent to union (see Appendix \ref{appendix_equivalent}). $\hat{o}_i^{t,n}=s(\textbf{o}_i^{t,n})$ is the final output of the $i$-th MLF unit in the $n$-th layer at time $t$. Then, $x_i^{t,n}$ can be computed by Eq.~(\ref{eq1}), where $o_i^{t,n}$ is replaced with $\hat{o}_i^{t,n}$.

Comparing Eq.~(\ref{eq2})-(\ref{eq3}) and Eq.~(\ref{eq6})-(\ref{eq8}), it can be seen that MLF unit doesn't introduce additional trainable parameters to the network, but just replaces LIF neurons with MLF units. Benefitting from the union of multiple spikes, MLF unit can distinguish some sharp features with large values and the non-sharp features with small values.
\subsubsection{The backward process}
To demonstrate that MLF can make the gradient propagation more efficient in SD, we next deduce the backward process of MLF method.

In order to obtain the gradients of weights and biases, we first derive the gradients of $o_{i,k}^{t,n}$, $\hat{o}_{i}^{t,n}$ and $u_{i,k}^{t,n}$, With $L$ representing the loss function, the gradients $\partial L/\partial o_{i,k}^{t,n}$, $\partial L/\partial \hat{o}_{i}^{t,n}$ and $\partial L/\partial u_{i,k}^{t,n}$ can be computed by applying the chain rule as follows
\begin{align}
    \frac{\partial L}{\partial o_{i,k}^{t,n}}&=\frac{\partial L}{\partial \hat{o}_{i}^{t,n}}+\frac{\partial L}{\partial u_{i,k}^{t+1,n}}u_{i,k}^{t,n}(-k_{\tau}), \label{eq12} \\
    \frac{\partial L}{\partial \hat{o}_{i}^{t,n}}&=\sum_{j=1}^{l(n+1)}\sum_{m=1}^K(\frac{\partial L}{\partial u_{j,m}^{t,n+1}}w_{ji}^n), \label{eq12_new}\\
    \frac{\partial L}{\partial u_{i,k}^{t,n}}&=\frac{\partial L}{\partial o_{i,k}^{t,n}}\frac{\partial o_{i,k}^{t,n}}{\partial u_{i,k}^{t,n}}+
    \frac{\partial L}{\partial u_{i,k}^{t+1,n}}k_{\tau}(1-o_{i,k}^{t,n}), \label{eq13}
\end{align}
We can observe that gradient $\partial L/\partial o_{i,k}^{t,n}$ and $\partial L/\partial u_{i,k}^{t,n}$ come from two directions: SD (the left part in Eq.~(\ref{eq12}), (\ref{eq13})) and TD (the right part in Eq.~(\ref{eq12}), (\ref{eq13})). Gradient $\partial L/\partial \hat{o}_{i}^{t,n}$ comes from SD. Finally, we can obtain the gradients of weights $\textbf{w}^n$ and biases $\textbf{b}^n$ as follows
\begin{align}
\label{eq14}
\frac{\partial L}{\partial \textbf{w}^n}&=\sum_{t=1}^T\sum_{k=1}^K\frac{\partial L}{\partial \textbf{u}_k^{t,n}}
\frac{\partial \textbf{u}_k^{t,n}}{\partial \textbf{w}^n}
=\sum_{t=1}^T(\sum_{k=1}^K\frac{\partial L}{\partial \textbf{u}_k^{t,n}})\hat{\textbf{o}}^{{t,n-1}^T}, \\
\label{eq15}
\frac{\partial L}{\partial \textbf{b}^n}&=\sum_{t=1}^T\sum_{k=1}^K\frac{\partial L}{\partial \textbf{u}_k^{t,n}}
\frac{\partial \textbf{u}_k^{t,n}}{\partial \textbf{b}^n}
=\sum_{t=1}^T(\sum_{k=1}^K\frac{\partial L}{\partial \textbf{u}_k^{t,n}}),
\end{align}
where $T$ is the number of timesteps. Due to the non-differentiable property of spiking activity, $\partial o_k/\partial u_k$ cannot be derived. To solve this problem, we adopt the rectangular function $h_k(u_k)$~\cite{Wu2018} to approximate the derivative of spike activity, which is defined by
\begin{equation}
\label{eq16}
\frac{\partial o_k}{\partial u_k} \approx h_k(u_k)=\frac{1}{a}sign(|u_k-{V_{th}}_k|<\frac{a}{2}),
\end{equation}
where $a$ is the width parameter of the rectangular function.

\begin{figure}[t]
\centering
\includegraphics[width=0.95\columnwidth]{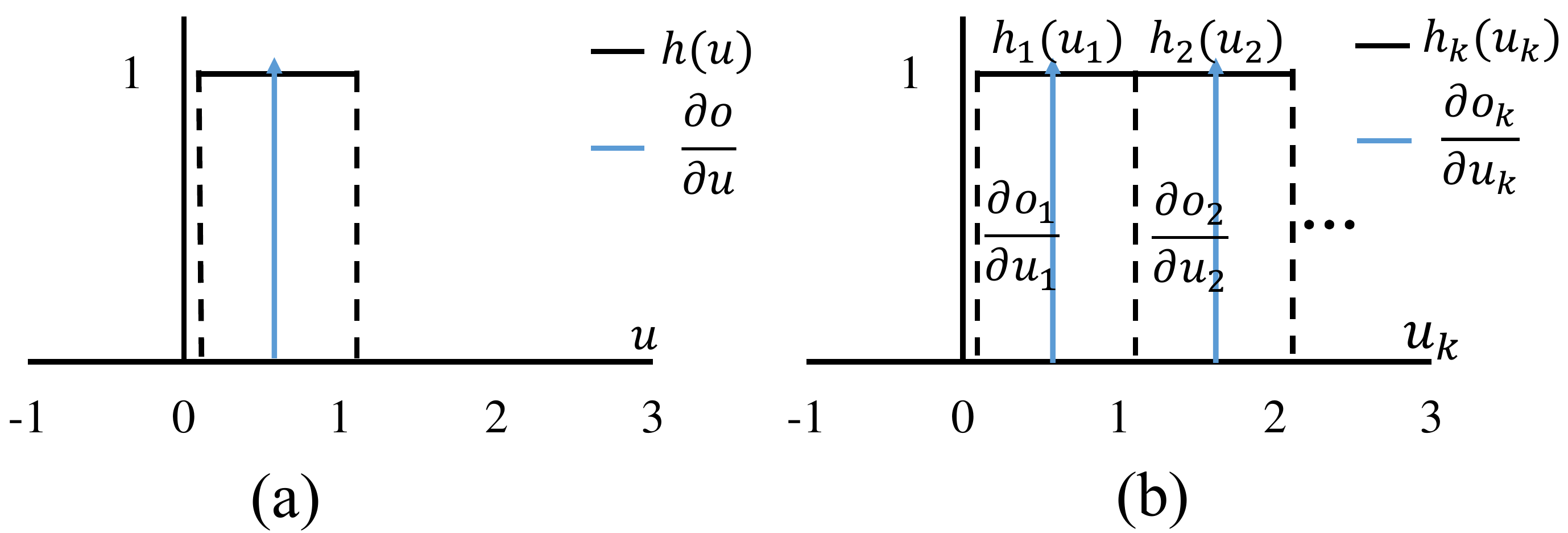} 
\caption{(a) The distribution of approximate derivative of one-level firing. (b) The distribution of approximate derivatives of multi-level firing.}
\label{fig_derivative}
\end{figure}

Considering the gradient propagation in SD from $n$-th layer to $(n-1)$-th layer, the spatial propagation link can be described as: $\partial L/\partial \hat{o}_{i}^{t,n} \rightarrow (\partial L/\partial o_{i,1}^{t,n},...,\partial L/\partial o_{i,K}^{t,n}) \rightarrow (\partial L/\partial u_{i,1}^{t,n},...,\partial L/\partial u_{i,K}^{t,n}) \rightarrow \partial L/\partial \hat{o}_{i}^{t,n-1}$. If it is only one-level firing $(K=1)$, the model will become the standard STBP model. In this case, numerous neurons will fall into the saturation area outside the rectangular area, some of which will become dormant unit, shown in Fig.~\ref{fig_derivative}(a), and the corresponding $\partial o_{i,1}^{t,n}/\partial u_{i,1}^{t,n}$ will be zero due to the limited width of the approximate derivative. Consequently, most of $\partial L/\partial u_{i,1}^{t,n}$ will lose the gradients propagated from SD, and the spatial propagation links from $n$-th layer to $(n-1)$-th layer will be broken, which makes the gradient propagation blocked in SD.

In the case of multi-level firing ($K>1$) and non-overlapping distribution of each level $h_k(u_k)$ where ${V_{th}}_{k+1}-{V_{th}}_{k}=a$, MLF units are less likely to fall into the dormant state after receiving inputs because of the wider non-zero area, as shown in Fig.~\ref{fig_derivative}(b). As a result, MLF can guarantee efficient gradient propagation through the spatial propagation links unless all levels of $u_k$ fall into the areas outside the corresponding rectangular areas, and the adjustment of $\textbf{w}^n$ and $\textbf{b}^n$ can be accelerated during the training process.

The pseudo code for the overall training of the forward and backward process is shown in Appendix \ref{appendix_pseudo}.
\subsection{Dormant-Suppressed Residual Network}
Residual network (ResNet)~\cite{He2016}, as one of the most widely used basic blocks, has achieved great success in deep networks. To convert ResNet into spiking ResNet, we replace BN and ReLU by tdBN~\cite{Zheng2021} and MLF units respectively, where tdBN is used to coordinate distribution difference and normalize the input distribution to $N(0, {(V_{th_1})}^2)$. In spiking ResNet, MLF activation is after the addition of the shortcut connection, as shown by the dotted line in Fig.~\ref{fig_resnet}. The addition will increase the values of the feature map before activation. In SNNs, the increase of the values will make inputs exceed the right side of the rectangular area in Fig.~\ref{fig_derivative} resulting in more dormant units. Besides, due to the discrete property of the activation function, the shortcut connection cannot perform identity mapping well.
Therefore, spiking ResNet still suffers from network degradation, which prevents the directly trained SNNs from going deeper.

\begin{figure}[tb]
\centering
\includegraphics[width=0.85\columnwidth]{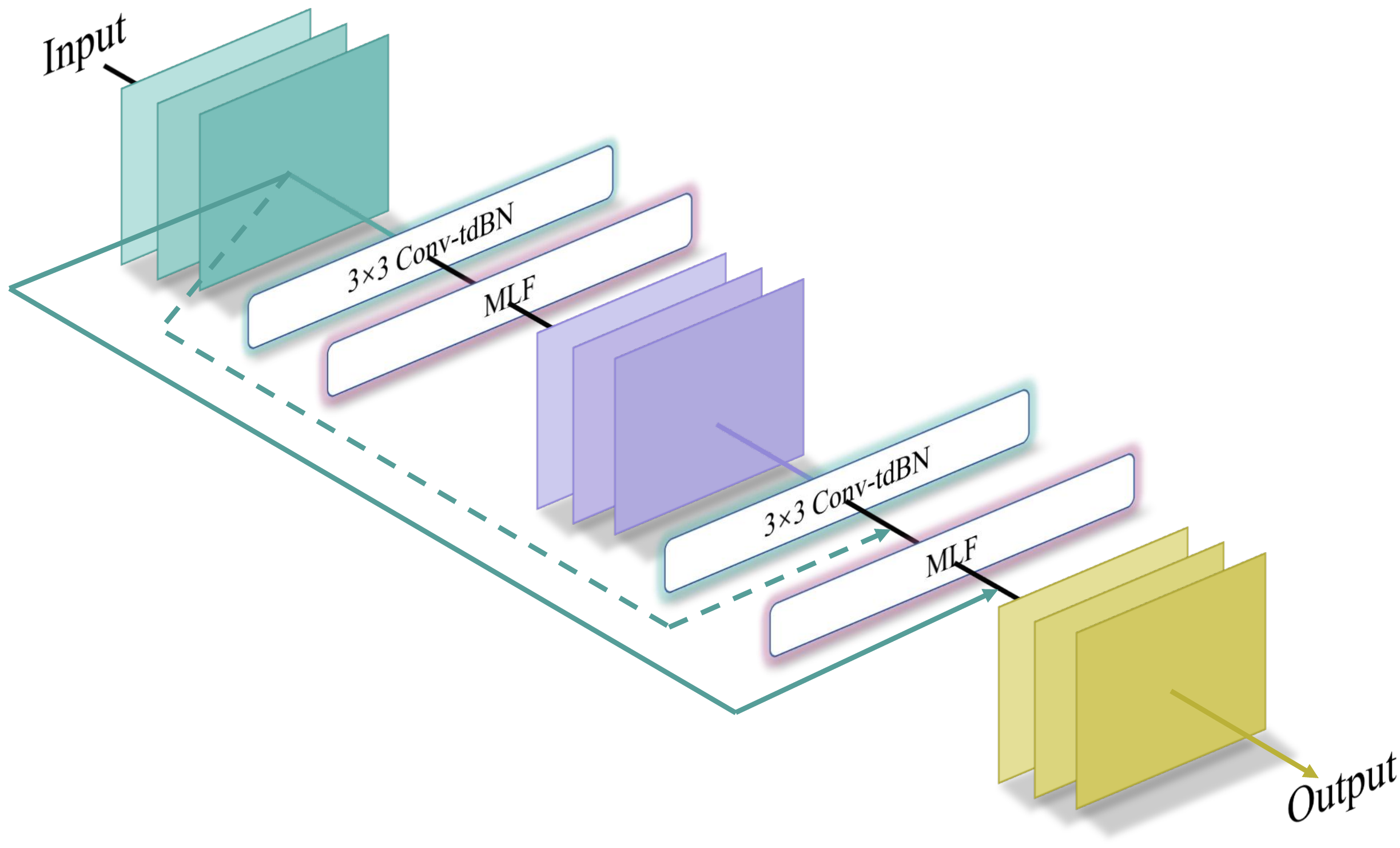} 
\caption{Illustration of spiking ResNet and spiking DS-ResNet. The dotted line connection represents spiking ResNet, and the solid line connection represents spiking DS-ResNet.}
\label{fig_resnet}
\end{figure}

To address the above two issues, we replace activation after addition structure with activation before addition structure and propose the spiking dormant-suppressed residual network (spiking DS-ResNet), as shown by the solid line in Fig.~\ref{fig_resnet}. Spiking DS-ResNet can reduce the proportion of dormant units at the output of each block, enabling more efficient gradient propagation and more efficient identity mapping of discrete spikes in very deep SNNs. We formalize this improvement in theorem~\ref{theorem1}.

\begin{theorem}
\label{theorem1}
Considering the values of feature map normalized by tdBN satisfy $x \sim N(0, {(V_{th_1})}^2)$, the probabilities of a spike from shortcut connection leading to dormant unit in spiking DS-ResNet and spiking ResNet are $P^*$ and $P$ respectively, and the abilities of identity mapping of spiking DS-ResNet and spiking ResNet are $I^*$ and $I$ respectively, then we have $P^* < P$ and $I^* > I$.
\end{theorem}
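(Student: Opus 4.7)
The plan is to isolate the single architectural difference -- in spiking ResNet the shortcut spike $s$ enters the final activation together with the normalized main-path signal $x$, while in spiking DS-ResNet the activation is applied to $x$ alone and $s$ is added afterwards -- and then reduce both claims to elementary calculations involving the standard normal CDF $\Phi$. Throughout, I would formalize a \emph{dormant} unit as one whose membrane potential exceeds $V_{th_K} + a/2$, since each rectangular surrogate $h_k(u_k)$ from Eq.~(\ref{eq16}) vanishes there and the spatial gradient link is broken, consistent with the informal description in Section 4.1.

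For the probability claim $P^* < P$, I would first write the input to the final activation in spiking ResNet as $x + s$, with $x \sim N(0, V_{th_1}^2)$ by the tdBN normalization and $s \in \{1, \ldots, K\}$ the shortcut spike value. A dormant unit therefore arises with probability
\begin{equation*}
P \;=\; \Pr\bigl(x + s > V_{th_K} + a/2\bigr) \;=\; 1 - \Phi\bigl((V_{th_K} + a/2 - s)/V_{th_1}\bigr) \;>\; 0.
\end{equation*}
In spiking DS-ResNet, the shortcut spike is injected after the activation of the current block, so it contributes neither to the thresholded membrane potential nor to the rectangular surrogate evaluated on it; hence $P^* = 0$, giving $P^* < P$.

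For the identity-mapping claim $I^* > I$, I would define the identity-mapping ability as the probability, over the random main-path signal $x$, that the block output equals the shortcut spike $s$ (which is exactly what identity mapping requires). In spiking DS-ResNet the output is $f(x) + s$, which equals $s$ exactly when $f(x) = 0$, i.e., when $x < V_{th_1} - a/2$; thus $I^* = \Phi\bigl((V_{th_1} - a/2)/V_{th_1}\bigr)$, close to $\Phi(1)$ when $a$ is small relative to $V_{th_1}$. In spiking ResNet the output is $f(x + s)$, which equals $s$ only when $x + s$ lands in the narrow window that accumulates exactly $s$ levels, an interval for $x$ of length $a$; writing $I$ as the corresponding difference of two values of $\Phi$ and using unimodality of the normal density shows $I < I^*$.

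The main obstacle will be pinning down the precise mathematical meaning of $P, P^*, I, I^*$, since the theorem states them abstractly: the proof hinges on formalizing dormancy as saturation of the last rectangular window and identity mapping as the block output equaling the propagated shortcut spike. Once these choices are made, both inequalities reduce to comparing a large left-tail probability of $N(0, V_{th_1}^2)$ against a probability concentrated on a bounded interval of length $a$; multi-valued $s$ is handled by conditioning on the value and does not alter the direction of either inequality.
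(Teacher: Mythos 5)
Your argument for the first inequality is essentially sound and close in spirit to the paper's: the paper (taking $K=1$) computes $P = 1-\Phi\bigl((V_{th_1}+a/2-1)/V_{th_1}\bigr)$ for spiking ResNet, where the shortcut spike shifts the membrane potential by $1$ before activation, and $P^* = 1-\Phi\bigl((V_{th_1}+a/2)/V_{th_1}\bigr)$ for spiking DS-ResNet, whereas you take the counterfactual reading $P^*=0$ because the shortcut never touches the thresholded potential; either formalization gives $P^* < P$, so this half is fine.

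The identity-mapping half, however, has genuine gaps. First, you assert $f(x)=0$ iff $x < V_{th_1}-a/2$; the width $a$ enters only the surrogate derivative $h_k(u_k)$ of Eq.~(\ref{eq16}), while the forward activation of Eq.~(\ref{eq3})/(\ref{eq7}) fires iff $u \ge V_{th_1}$, so the correct condition is $x < V_{th_1}$. Second, your claim that in spiking ResNet the output equals $s$ only on an interval of length $a$ fails for the top level ($s=K$ gives a half-line), and your concluding comparison rests on $a$ being small relative to $V_{th_1}$, which is the opposite of the paper's actual regime ($a=1$, $V_{th_1}=0.6$, thresholds spaced by $a$); indeed, with your $a/2$ offset your own definition reverses the inequality numerically for these parameters (e.g.\ $s=1$, $K\ge 2$: $I \approx \Phi(1)-\Phi(-2/3)\approx 0.59$ versus $I^* \approx \Phi(1/6)\approx 0.57$). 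The paper sidesteps all of this with a different, simpler formalization: identity-mapping ability is whether the shortcut spike is \emph{reflected} in the block output at all. For spiking ResNet (with $K=1$) adding the spike changes the output only when $V_{th_1}-1 < x < V_{th_1}$, an event of probability $\Phi(1)-\Phi\bigl((V_{th_1}-1)/V_{th_1}\bigr) < 1$ (about $59\%$ at $V_{th_1}=0.6$), whereas in spiking DS-ResNet the spike is added after activation and always appears in the output, so $I^* > I$ with no smallness assumption on $a$. To repair your version you would need to use the correct firing condition, treat every level $s$ including $s=K$, and verify the inequality for the actual parameter values rather than in an $a\to 0$ asymptotic; adopting the paper's ``spike reflected in output'' definition is the cleaner route.
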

\begin{proof}
The proof of Theorem 1 is presented in Appendix \ref{appendix_DS_resnet}.
\end{proof}

In this way, dormant units can be effectively suppressed and the shortcut connection can retain the ability of identity mapping, so as to further solve the degradation problem.
\section{Experiments}
The source code of our MLF and spiking DS-ResNet implementation\footnote{\url{https://github.com/langfengQ/MLF-DSResNet}.} is available online.
\subsection{Experimental Settings}
The basic network architecture of our experiments is ResNet. The first layer is $3\times3$ convolutions as the encoding layer. Similar to ResNet for CIFAR10 in~\cite{He2016}, we start stacking residual block, which contains 2 convolution layers, to $6N$ layers with $2N$ layers for each feature map size. The number of channels will be doubled if the feature map is halved. We set three different initial channels (small, middle, large) for the first layer, the numbers of which are 16, 32, 64 respectively. ResNet ends with global average pooling and a 10/11 fully-connected classifier. The total layers of our ResNet are $6N+2$. More detailed network architecture and other experimental settings are summarized in Appendix \ref{appendix_setting}.
\begin{table*}[t]
\centering
\begin{tabular}{c c c r c r c r}
    \toprule
     Model & Method & \multicolumn{2}{c}{CIFAR10} & \multicolumn{2}{c}{DVS-Gesture} & \multicolumn{2}{c}{CIFAR10-DVS} \\ \midrule
      &  & Acc. & Params & Acc.  & Params & Acc. & Params\\ \midrule
    \cite{Amir2017} & TrueNorth & - & \multicolumn{1}{c}{-} & 94.59 & 18.99M & - & \multicolumn{1}{c}{-} \\
    \cite{Sengupta2019} & ANN-SNN & 91.55 & 33.63M & - & \multicolumn{1}{c}{-} & - & \multicolumn{1}{c}{-} \\
    \cite{Wu2019} & STBP & 90.53 & 44.99M & - & \multicolumn{1}{c}{-} & 60.50 & 26.82M  \\
    \cite{He2020} & STBP & - & \multicolumn{1}{c}{-} & 93.40 & 2.32M & - & \multicolumn{1}{c}{-} \\
    \cite{Kugele2020} & ANN-SNN & - & \multicolumn{1}{c}{-} & 95.56 & 0.80M & 65.61 & 0.50M \\
    \cite{Lee2020} &  Spike-based BP & 90.95 & 18.20M & - & \multicolumn{1}{c}{-} & - & \multicolumn{1}{c}{-} \\
    \cite{Wu2021} & ASF-BP & 91.35 & 8.81M & - & \multicolumn{1}{c}{-} & 62.50 & 26.78M\\
    \cite{Zheng2021} & STBP-tdBN & 93.16 & 15.10M & 96.87 & 3.50M & 67.80 & 16.27M \\
    \cite{Yan2021} & ANN-SNN & 94.16 & 9.33M & - & \multicolumn{1}{c}{-} & - & \multicolumn{1}{c}{-}\\ \midrule
    Our model & MLF ($K=3$) + spiking DS-ResNet & 94.25 & 4.32M & 97.29 & 0.27M & 70.36 & 0.69M\\\bottomrule
\end{tabular}
\caption{Comparison of different methods on three datasets. The unit of Acc. is \%.}
\label{tab_acc}
\end{table*}
\subsection{Classification Accuracy}
\paragraph{CIFAR10}
For CIFAR10, we apply spiking DS-ResNet (20-layer, large) to evaluate the average performance of our model in 4 timesteps. The level of MLF is set to 3. The performance is averaged over 5 runs.

Table~\ref{tab_acc} shows the comparison of our results and existing state-of-the-art results on CIFAR10. We notice that~\cite{Yan2021} reported a competitive accuracy on CIFAR10, whereas it is a conversion-based method and requires a large number of timesteps to ensure encoding resolution. Our model achieves state-of-the-art performance (94.25\%) with fewer timesteps and fewer trainable parameters.

\paragraph{DVS-Gesture}
DVS-Gesture~\cite{Amir2017} is a neuromorphic vision dataset with more temporal information. We apply spiking DS-ResNet (20-layer, small) to evaluate the average performance of our model with MLF levels of 3. The performance is averaged over 5 runs.

As we can see from Table~\ref{tab_acc},~\cite{Zheng2021} have better performance on DVS-Gesture with directly-trained SNNs compared with~\cite{He2020}. The reason is that the former succeeded in the direct training of large-size and deep SNNs. However, they have to apply a large network structure to get a competitive accuracy, due to a large number of dormant units and the weak expression ability of neurons. We achieve more efficient training and get state-of-the-art performance (97.29\%) with a smaller network structure, the initial channel number of which is only 16.

\paragraph{CIFAR10-DVS}
CIFAR10-DVS~\cite{Li2017} is a more challenging and easy-overfitted neuromorphic vision dataset. We apply spiking DS-ResNet (14-layer, middle) to evaluate the average performance of our model with MLF levels of 3. The performance is averaged over 5 runs.

As indicated in Table~\ref{tab_acc},~\cite{Wu2021} ignored the information in TD when training SNNs, resulting in their accuracy not being as good as~\cite{Zheng2021}, which demonstrates the advantages of the spike-based direct training method in dealing with spatio-temporal information. On this basis, our model further improves the performances of directly trained SNNs and gets state-of-the-art performance (70.36\%) with much smaller network architecture.
\subsection{Analysis of Our Model}
\subsubsection{Effects of Level \textit{K} and Spiking DS-ResNet}
The performance improvement of our model benefits from two aspects: MLF and spiking DS-ResNet. We design experiments to explicitly analyze the effects of the level of MLF and spiking DS-ResNet. We apply ResNet (20-layer, large) as basic architecture on CIFAR10 in 4 timesteps with various methods.

\begin{table}[tb]
\centering
\begin{tabular}{c | c}
    \toprule
    Method & Acc. (\%) \\\midrule
    Spiking ResNet($K=1$) & 92.55 \\
    ResNet-SNN($K=1$) & 93.04  \\
    Spiking DS-ResNet($K=1$) & 93.54 \\
    Spiking DS-ResNet($K=2$) & 94.13 \\
    Spiking DS-ResNet($K=3$) & 94.25 \\

    \bottomrule
\end{tabular}
\caption{Effects of Level $K$ and Spiking DS-ResNet.}
\label{tab_gradientvanishing}
\end{table}

As shown in Table~\ref{tab_gradientvanishing}, ResNet-SNN~\cite{Zheng2021} has better performance than spiking ResNet for the reason that ResNet-SNN can efficiently control the generation of the dormant units by adding tdBN to every shortcut connection. However, it cannot efficiently perform identity mapping and will increase the number of parameters. Our spiking DS-ResNet can take both dormant unit and identity mapping into account, therefore having better performance.
Besides, MLF shows prominent advantages in the improvement of accuracies, and a larger $K$ will benefit the performance progressively.
While, the improvement from $K=1$ to $K=2$ is greater than that from $K=2$ to $K=3$ for the reason that the data falling into $h_2(u_2)$ is much more than the data falling into $h_3(u_3)$ considering that the input distribution satisfies $N(0, {(V_{th_1})}^2)$.
In our experiments, MLF with level 3 is enough to cover most of the sharp features (see Appendix \ref{appendix_level}).
\begin{figure*}[t]
    \centering
    \setcounter{figure}{4}
    \includegraphics[width=1\textwidth]{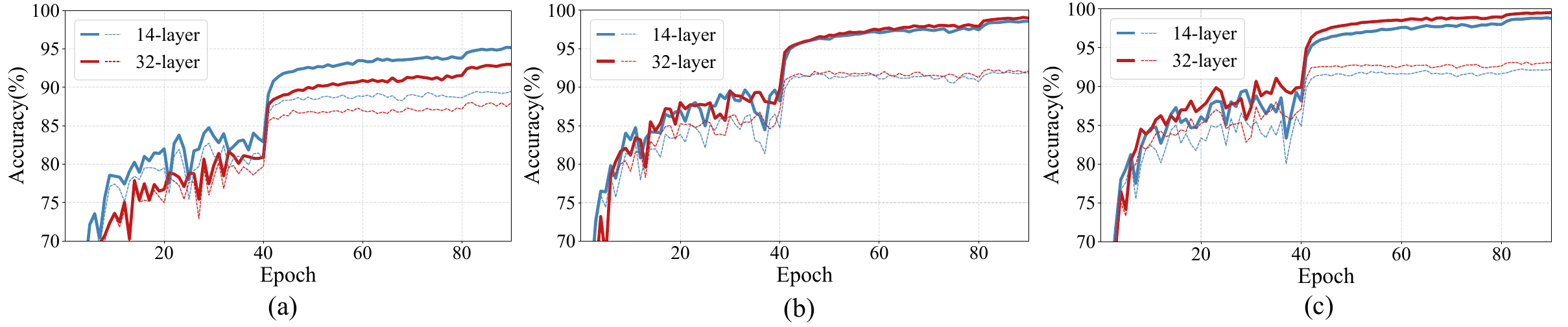} 
    \caption{Training on CIFAR10. Thin dotted curves denote testing accuracy, and bold solid curves denote training accuracy. (a) ResNet-SNN without MLF. (b) ResNet-SNN with MLF. (c) Spiking DS-ResNet with MLF.}
    \label{fig_goingdeeper}
    \end{figure*}
\begin{figure}[t]
\centering
\setcounter{figure}{3}
\includegraphics[width=0.95\columnwidth]{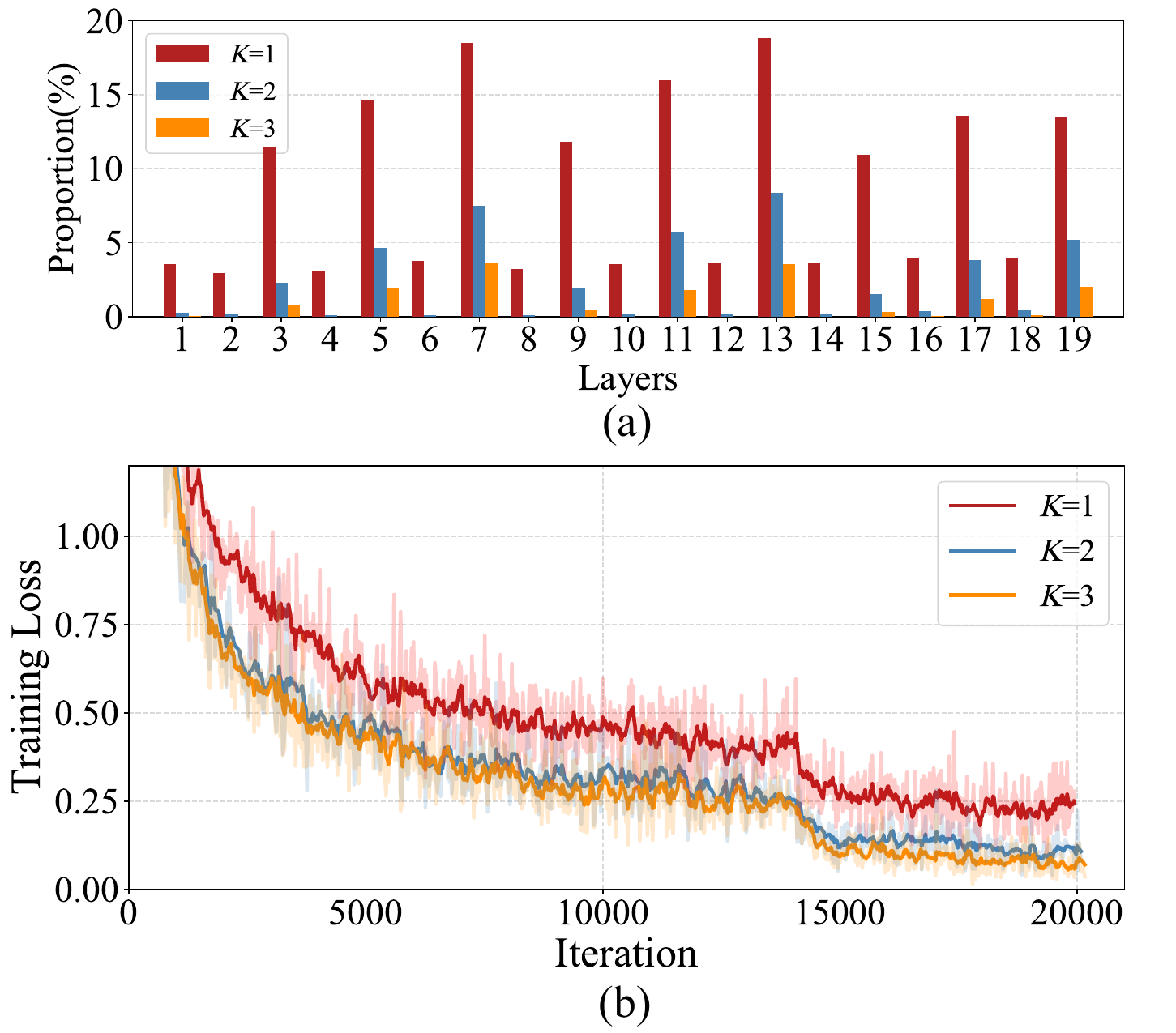} 
\caption{(a) The average proportion of dormant units in each layer. (b) The training loss during the whole training process.}
\label{fig_gradientvanishing}
\end{figure}

\subsubsection{MLF for Gradient Vanishing}
In this part, we conduct experiments to demonstrate that MLF can effectively alleviate the blocking of gradient propagation in SD. We apply the spiking ResNet (20-layer, middle) on CIFAR10 with levels of 1, 2, 3 in 4 timesteps.

As indicated in Fig.~\ref{fig_gradientvanishing}(a), there is a large proportion of dormant units during the training process when $K=1$ (without MLF), which will lead to the gradient vanishing and prevent further enhancement of sharp features. While, MLF ($K>1$) has prominent advantages in reducing the proportion of dormant units, and the gradient vanishing can be alleviated. In this way, MLF can increase the gradient of trainable parameters (see Appendix \ref{appendix_increase}), accelerate the convergence speed of network training, and significantly improve the performance of directly trained SNNs, as shown in Fig.~\ref{fig_gradientvanishing}(b).

More analysis of gradient vanishing is presented in Appendix \ref{appendix_vanishing}.

\subsubsection{Going Deeper}
In directly trained deep SNNs, the serious degradation problem greatly restricts SNNs to a shallow. In this part, we take ResNet (\textit{x}-layer, middle) as basic architecture to conduct multiple experiments on very deep SNNs to demonstrate the ability of our model to solve the degradation problem. We train various models and take the reported deepest network architecture (ResNet-SNN) as the baseline. The level of MLF is set to 3. We record the training and testing accuracy of the whole training process.

As depicted in Fig.~\ref{fig_goingdeeper}(a), ResNet-SNN without MLF have experienced serious network degradation problem only at 32-layer (the deeper 32-layer has lower training/testing accuracy than the shallow 14-layer). After introducing MLF, the degradation problem is alleviated, as shown in Fig.~\ref{fig_goingdeeper}(b). At the same time, MLF effectively improves the training/testing accuracy of both 14-layer and 32-layer. Finally, with spiking DS-ResNet and MLF, the degradation problem is further solved, as shown in Fig.~\ref{fig_goingdeeper}(c).

Moreover, we explore deeper networks to test our model on CIFAR10, and we achieve very deep SNNs (68-layer) without degradation, which validates that our model can efficiently solve the degradation problem in deep SNNs. The testing accuracy is summarized in Table~\ref{tab_goingdeeper}.

\begin{table}[tb]
\centering
\begin{tabular}{c@{\,}|@{ \,} c @{\, \,}c @{\, \,}c @{\, \,}c @{\, \,} c}
    \toprule
    Layer&14-layer&20-layer&32-layer&44-layer&68-layer\\ \midrule
    Acc.(\%) & 92.46 & 92.95 & 93.34 & 93.42 & 93.48 \\ \bottomrule
\end{tabular}
\caption{Accuracy on CIFAR10 with spiking DS-ResNet and MLF.}
\label{tab_goingdeeper}
\end{table}

\section{Conclusion}
In this paper, we have proposed the MLF method based on STBP and spiking DS-ResNet for direct training of deep SNNs to combat the gradient vanishing and network degradation caused by the limitation of the binary and non-differentiable properties of spike activities. We prove that MLF can expand the non-zero area of the approximate derivatives by allocating the coverage of the approximate derivative of each level, so as to reduce dormant units and make the gradient propagation more efficient in deep SNNs. Besides, an MLF unit can generate spikes with different thresholds when activating the input, which can improve its expression ability. Spiking DS-ResNet can reduce the probability of dormant unit generation making it more suitable for gradient propagation and can efficiently perform identity mapping of discrete spikes in very deep SNNs. With MLF and spiking DS-ResNet, our model achieves state-of-the-art performances with fewer parameters on both non-neuromorphic and neuromorphic datasets compared with other SNN models and makes SNNs go very deep without degradation. This paper provides an efficient solution to gradient vanishing and network degradation in the directly trained SNNs enabling SNNs to go deeper with high performance.

\section*{Acknowledgments}
This work is supported by China Brain Project (2021ZD0200400), Natural Science Foundation of China (No. 61925603), the Key Research and Development Program of Zhejiang Province in China (2020C03004), and Zhejiang Lab.

\bibliographystyle{named}
\bibliography{ijcai22}

\appendix
\numberwithin{equation}{section}
\numberwithin{figure}{section}
\numberwithin{table}{section}
\numberwithin{algorithm}{section}
\onecolumn
\subsection*{
    \centerline{\LARGE Appendices}
}

\setcounter{tocdepth}{2}
\section{Equivalence Proof of Union}\label{appendix_equivalent}
The final output of MLF unit is the union of the spikes fired by all level neurons in Fig.~\ref{fig_MLF}. We can split the union and directly connect the LIF neurons to the next MLF unit through a shared weight, which is biologically plausible, as shown in Fig.~\ref{fig_bio_plasti_explan}. The pink weight $w_{i1}^{n+1}$ is the shared weight. This connection in Fig.~\ref{fig_bio_plasti_explan} is completely equivalent to the MLF connection in Fig.~\ref{fig_MLF}. Therefore, pre-synaptic input $x_i^{t,n}$ can be computed as
\begin{equation}
\begin{aligned}
\label{bio_plasti_explan}
x_{i}^{t,n}&=\sum_{j}\textbf{w}_{ij}^{n}(\textbf{o}_i^{t,n-1})^T+b_i^n \\
&=\sum_{j}(w_{ij}^{n}, ..., w_{ij}^{n})(o_{i,1}^{t,n-1},..,o_{i,K}^{t,n-1})^T+b_i^n \\
&=\sum_{j}(w_{ij}^{n}o_{i,1}^{t,n-1}+...+w_{ij}^{n}o_{i,K}^{t,n-1})+b_i^n \\
&=\sum_{j}w_{ij}\sum_{k}o_{i,k}^{t,n-1}+b_i^n\\
&=\sum_{j}w_{ij}s(\textbf{o}_i^{t,n-1})+b_i^n,
\end{aligned}
\end{equation}
where $\textbf{w}_{ij}^{n}=(w_{ij}^{n}, w_{ij}^{n}, ..., w_{ij}^{n})$ denotes the vector of shared weights. Thus, Eq. (\ref{bio_plasti_explan}) is equivalent to Eq. (\ref{eq8}) plus Eq. (\ref{eq1}).

\begin{figure}[h]
\centering
\includegraphics[width=0.5\columnwidth]{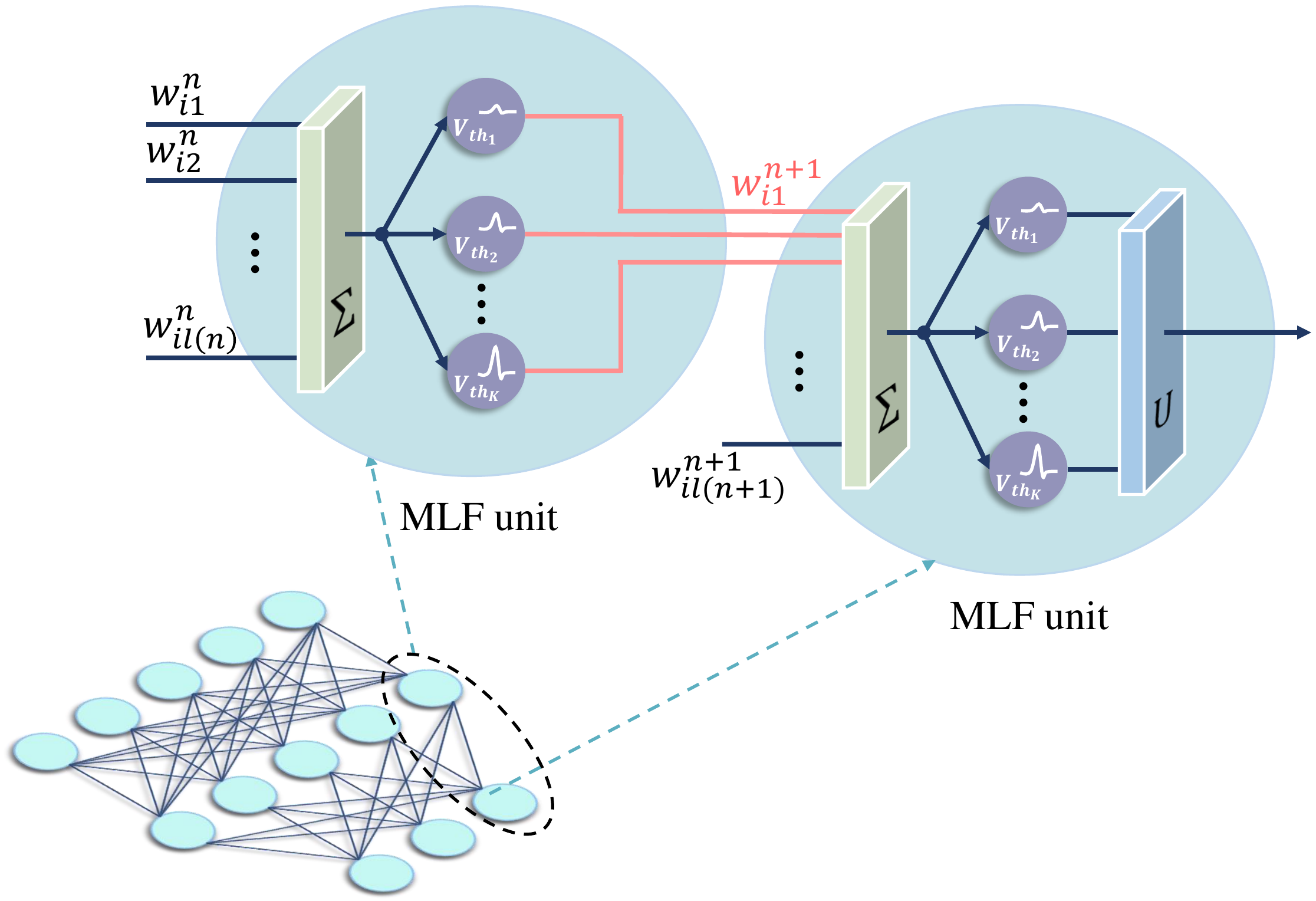} 
\caption{Illustration of the equivalent structure of MLF unit. LIF neurons in a MLF unit share a weight $w_{i1}^{n+1}$.}
\label{fig_bio_plasti_explan}
\end{figure}

\section{The Overall Training}\label{appendix_pseudo}
Our experiments are based on spiking convolution neural network, so we first rewrite Eq. (\ref{eq1}) to the convolution form as follows,
\begin{equation}
\label{overalltraining}
x^{t,n}=w_{conv}^n \circledast \hat{o}^{t,n-1}+b_{conv}^n,
\end{equation}
where $\circledast$ denotes the convolution operation. $x^{t,n}\in R^{N \times C \times H \times W}$ is the output of the $n$-th convolution layer at time $t$, where $N$, $C$, $H$ and $W$ denote batch size, channel number, height and width respectively. $w_{conv}^n$ and $b_{conv}^n$ represent convolution kernel and bias of the $n$-th convolution layer respectively.

The pseudo code for the overall training of our model in one iteration is shown in Algorithm \ref{alg_overaltraining}.
\begin{algorithm}[H]
\caption{Overall training in one iteration.}
\label{alg_overaltraining}
\textbf{Input}: network input $X^t(t=1,2,...,T)$, class label $Y$.  \\
\textbf{Output}: the updated parameters $\textbf{w}^n$ and $\textbf{b}^n$ of the network.
\begin{algorithmic}[1] 
\Function{MLF}{$x^{n}$}
\For{$k=1$ to $K$}
\State $u^{1,n}_k=x^{1,n}$
\EndFor
\State $\textbf{o}^{1,n}=f(\textbf{u}^{1,n}-\textbf{V}_{th})$
\State $\hat{o}^{1,n}=s(\textbf{o}^{1,n})$
\For{$t=1$ to $T$}
\State $\textbf{u}^{t+1,n}=k_{\tau}\textbf{u}^{t,n}\circ(1-\textbf{o}^{t,n})+x^{t+1,n}$
\State $\textbf{o}^{t+1,n}=f(\textbf{u}^{t+1,n}-\textbf{V}_{th})$
\State $\hat{o}^{t+1,n}=s(\textbf{o}^{t+1,n})$
\EndFor
\State \Return $\hat{o}^{n}$
\EndFunction

\Statex \Statex \textbf{Forward:}
\For{$t=1$ to $T$}
\State $x^{t,1}=w_{conv}^n \circledast X^{t}+b_{conv}^n$
\EndFor
\State $y^1 \gets$ tdBN$(x^{1})$ \quad // \cite{Zheng2021}
\State $\textbf{o}^1 \gets$ MLF$(y^{1})$
\For{$n=2$ to $N$}
\For{$t=1$ to $T$}
\State $x^{t,n}=w_{conv}^n \circledast \hat{o}^{t,n-1}+b_{conv}^n$
\EndFor
\State $y^n \gets$ tdBN$(x^{n})$ \quad // \cite{Zheng2021}
\State $\hat{o}^n \gets$ MLF$(y^{n})$
\EndFor
\State $Q \gets$ Decoding$(\hat{o}^{N-1})$
\State $L \gets$ CrossEntropy$(Q, Y)$ 

\Statex \Statex \textbf{Backward:}
\State Gradient zero initialization
\For{$t=T$ to $1$}
\State $\frac{\partial L}{\partial \hat{o}^{t, N-1}} \gets$ GradientCalculation$(L)$ 
\EndFor
\For{$n=N-1$ to $1$}
\For{$t=T$ to $1$}
\For{$k=K$ to $1$}
\State $\frac{\partial L}{\partial u^{t, n}_k} \gets$ GradientBackward$(\frac{\partial L}{\partial \hat{o}^{t, n}},$ $\frac{\partial L}{\partial u^{t+1, n}_k})$ \quad // Eq. (\ref{eq12}), (\ref{eq13})
\EndFor
\State $\frac{\partial L}{\partial \hat{o}^{t, n-1}} \gets$ GradientBackward$(\frac{\partial L}{\partial u^{t, n}_1}, \frac{\partial L}{\partial u^{t, n}_2},..., \frac{\partial L}{\partial u^{t, n}_K})$ \quad // Eq. (\ref{eq12_new})
\EndFor
\EndFor
\State Update parameters $\textbf{w}^n$ and $\textbf{b}^n$ \quad // Eq. (\ref{eq14}), (\ref{eq15})
\end{algorithmic}
\end{algorithm}

\section{Proof of Theorem 1}\label{appendix_DS_resnet}
\begin{theorem} 
Considering the values of the feature map normalized by tdBN satisfy $x \sim N(0, {(V_{th_1})}^2)$, the probabilities of a spike from shortcut connection leading to the dormant unit in spiking DS-ResNet and spiking ResNet are $P^*$ and $P$ respectively, and the abilities of identity mapping of spiking DS-ResNet and spiking ResNet are $I^*$ and $I$ respectively, then we have $P^* < P$ and $I^* > I$.
\end{theorem}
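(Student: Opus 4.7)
The statement cleanly splits into two independent inequalities, so my plan is to attack $P^{*} < P$ and $I^{*} > I$ separately, both hinging on the same structural observation: in spiking ResNet the MLF activation sits \emph{after} the shortcut addition, so the shortcut's spike enters the argument of the MLF, whereas in spiking DS-ResNet the MLF sits \emph{before} the addition, so the shortcut only perturbs an already-activated signal.

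For the dormant-unit comparison, I would first write down the argument of the MLF unit in each architecture. In spiking ResNet it is $z = x + s$, where $x \sim N(0, V_{th_1}^2)$ is the tdBN-normalized residual-branch output and $s \geq 1$ is the spike contribution coming in along the shortcut. A unit is dormant when its pre-activation lies past the rightmost rectangle, i.e.\ when $z \geq V_{th_K} + a/2$. In spiking DS-ResNet, the shortcut spike does not enter the argument of this block's MLF at all, so the only way to be dormant here is for the intrinsic $x$ to already exceed that threshold. Writing
\[
P = \Pr\!\bigl[x + s \geq V_{th_K} + a/2 \,\bigm|\, s \geq 1\bigr], \qquad P^{*} = \Pr\!\bigl[x \geq V_{th_K} + a/2\bigr],
\]
the inequality $P^{*} < P$ follows from the stochastic dominance of $x + s$ over $x$, which is immediate from the monotonicity of the Gaussian tail under a positive shift. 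A clean way to present this is to integrate the standard Gaussian density over $[V_{th_K}+a/2,\infty)$ versus $[V_{th_K}+a/2 - s,\infty)$ and note that the latter is strictly larger for every $s \geq 1$.

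For the identity-mapping comparison, I would first pin down what ``identity mapping ability'' means, since the theorem leaves this informal. The natural choice is the fidelity with which the shortcut signal $\hat o_{\text{sc}}$ is reproduced at the block output when the residual branch is forced to be zero. In spiking DS-ResNet the block output is $\mathrm{MLF}(F(x)) + \hat o_{\text{sc}}$; with $\mathrm{MLF}(F(x)) = 0$ this equals $\hat o_{\text{sc}}$ exactly, so $I^{*}$ attains the ideal value. In spiking ResNet the block output is $\mathrm{MLF}(F(x) + \hat o_{\text{sc}}) = \mathrm{MLF}(\hat o_{\text{sc}})$; since $\hat o_{\text{sc}}$ is a non-negative integer-valued spike count and the MLF levels are staggered at thresholds $V_{th_1} < \cdots < V_{th_K}$, a case analysis on $\hat o_{\text{sc}}$ against this ladder shows that $\mathrm{MLF}(\hat o_{\text{sc}}) \neq \hat o_{\text{sc}}$ in general: the discretisation loses information whenever $\hat o_{\text{sc}}$ does not coincide with a value that $s(\cdot)$ can produce from the activated levels. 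Aggregating these mismatches over the distribution of $\hat o_{\text{sc}}$ gives $I < I^{*}$.

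The main obstacle I anticipate is not the probability computation, which is routine, but fixing a sensible definition of $I$ and $I^{*}$: the argument only has teeth once ``identity mapping'' is quantified, for instance as the probability of exact reproduction of $\hat o_{\text{sc}}$, or as an expected $\ell_1$ error between the shortcut input and the block output under $F \equiv 0$. Given any such metric consistent with the qualitative intuition above, the DS-ResNet variant realises the ideal value while the ResNet variant incurs a strictly positive error because of the discreteness of MLF applied to the shortcut, closing the second inequality.
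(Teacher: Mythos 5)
Your treatment of the first inequality is essentially the paper's own argument: the paper also reduces to the $K=1$ case, writes $P=1-\Phi\bigl((V_{th_1}+a/2-1)/V_{th_1}\bigr)$ for spiking ResNet (shortcut spike shifts the pre-activation by $+1$ before the MLF) versus $P^{*}=1-\Phi\bigl((V_{th_1}+a/2)/V_{th_1}\bigr)$ for spiking DS-ResNet (shortcut does not enter the activation argument), and concludes $P^{*}<P$ from the monotone Gaussian tail. Your version stated with the rightmost edge $V_{th_K}+a/2$ and a shift $s\geq 1$ is a harmless generalization of the same computation, so this half is fine.

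The second inequality is where you have a genuine gap. Your proposed metric conditions on the residual branch being zero and asks whether $\mathrm{MLF}(\hat o_{\mathrm{sc}})$ reproduces $\hat o_{\mathrm{sc}}$; but with the paper's threshold ladder ($V_{th_k}=0.6+(k-1)a$, $a=1$, so $\textbf{V}_{th}=(0.6,1.6,2.6)$) the MLF of any integer spike count $m\in\{0,1,\dots,K\}$ is exactly $m$, so under your metric spiking ResNet also reproduces the shortcut perfectly and no strict inequality $I^{*}>I$ follows; your claim that ``a case analysis shows $\mathrm{MLF}(\hat o_{\mathrm{sc}})\neq\hat o_{\mathrm{sc}}$ in general'' does not hold for the relevant inputs, and the fallback ``given any metric consistent with the qualitative intuition'' is circular. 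The paper's point is different: the loss of identity mapping comes from the interaction of the shortcut spike with the random residual pre-activation $x\sim N(0,(V_{th_1})^2)$, not from discretizing a spike count in isolation. Concretely (for $K=1$), if $x<V_{th_1}-1$ the post-addition output is $0$ with or without the shortcut spike, and if $x>V_{th_1}$ it is $1$ with or without the spike, so the spike influences the output only when $V_{th_1}-1<x<V_{th_1}$, an event of probability $\Phi(1)-\Phi\bigl((V_{th_1}-1)/V_{th_1}\bigr)$ (about $59\%$ for $V_{th_1}=0.6$, i.e.\ roughly $41\%$ of shortcut spikes are absorbed); in spiking DS-ResNet the spike is added after activation and always appears in the output, giving $I^{*}>I$. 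To close your argument you would need to replace the $F\equiv 0$ conditioning with this marginalization over $x$ (or some equivalent measure of whether the shortcut spike can alter the block output).
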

\begin{proof}
For convenience, we assume that the feature maps of the residual connection and shortcut connection are independent of each other and do not consider the residual membrane potential. We consider the case of $K = 1$, so the gradient-available interval is $[V_{th_1} - a/2,V_{th_1} + a/2]$. For standard spiking ResNet, a spike from the shortcut connection will lead to the dormant unit if the following equation is satisfied.
\begin{align}
1 + x &> V_{th_1} + a/2,
\end{align}
where $x\sim N(0, {(V_{th_1})}^2)$ is the feature map, $1$ is the spike from shortcut connection. Therefore, the probability of this event is
\begin{align}
P = p(x &> V_{th_1} + a/2 - 1) = 1-\Phi(\frac{V_{th_1} + a/2 - 1}{V_{th_1}}),
\end{align}
where $\Phi$ is the cumulative distribution function (CDF) of standard normal distribution. For spiking DS-ResNet, the probability of a spike from the shortcut connection leading to the dormant unit satisfies
\begin{align}
P^* = p(x &> V_{th_1} + a/2) = 1-\Phi(\frac{V_{th_1} + a/2}{V_{th_1}}).
\end{align}
Thus, we have $P^* < P$. In experiment, we set $a=1$ and $V_{th_1}=0.6$, so the $P^*$ is much less than $P$.

At present, there is no unified standard for quantifying identity mapping ability. Here, we measure it by considering whether the spikes of the shortcut connection can be reflected in the output of a residual block. Specifically, if the output of a residual block is not affected by the shortcut connection, the spikes of the shortcut connection cannot be reflected in the output. For spiking ResNet, if $x<V_{th_{1}}-1$, the output will always be $0$ whether there is a spike from the shortcut connection or not. Similarly, if $x>V_{th_{1}}$, the output will always be $1$ whether there is a spike from the shortcut connection or not. Thus, in the above two cases, the spikes of the shortcut connection cannot be mapped to the output. On the contrary, we consider that the input of a residual block can be mapped to the output through the shortcut connection in case of 
\begin{align}
V_{th_1} - 1 < x < V_{th_1}.
\end{align}
The probability of this event is
\begin{align}
p(V_{th_1} - 1 < x < V_{th_1}) = \Phi(\frac{V_{th_1}}{V_{th_1}}) - \Phi(\frac{V_{th_1} - 1}{V_{th_1}}).
\end{align}
In practical application, we usually set $V_{th_1}=0.6$, which means there will be about 41\% input spikes that cannot be mapped to the output.
While for spiking DS-ResNet, all input spikes can be directly mapped to the output. Therefore we have $I^* > I$.

\end{proof}

\section{Details of Experiments}\label{appendix_setting}
\subsection{Dataset Introduction}
We conduct the experiments on a non-neuromorphic dataset and two neuromorphic datasets.
\subsubsection{CIFAR10}
CIFAR10 is a color image dataset widely used for identifying universal objects. It contains 50,000 training images and 10,000 testing images in 10 classes with size of $32 \times 32$.

For the data pre-processing of CIFAR10, we follow the standard data augmentation strategy. The original images are first randomly cropped and flipped, and then normalized by subtracting the global mean value of pixel intensity and divided by the global standard variance along each RGB channel.

\subsubsection{DVS-Gesture}
DVS-Gesture \cite{Amir2017} is a neuromorphic vision dataset, which is obtained by capturing the different hand gestures from 29 subjects under 3 illumination conditions. It is worth noting that each recorded sample in the DVS-Gesture dataset contains two gestures belonging to class 8 and we use both of them for training and testing. It contains 1,176 event streams from 23 subjects for training and 288 event streams from 6 subjects for testing, which belong to 11 classes with the size of $128 \times 128$.

For the data pre-processing of DVS-Gesture, we downsample the original event streams to the size of $32 \times 32$ and sample a slice every 30ms. In each timestep, the input data is only one slice. We set 40 timesteps for training and testing, which means we use the first 1.2s of each event stream.

\subsubsection{CIFAR10-DVS}
CIFAR10-DVS \cite{Li2017} is a neuromorphic vision dataset obtained by displaying the moving images of the CIFAR10 dataset on a monitor. It is a more challenging and easy-overfitted neuromorphic vision dataset due to the noisy environment, a small number of samples, and the large intra-class variance. It consists of 10,000 event streams in 10 classes with the size of $128 \times 128$. We randomly selected 9,000 of the event streams for training and the rest for testing.

For the data pre-processing of CIFAR10-DVS, we downsample the original event streams to the size of $42 \times 42$ and sample a slice every 10ms. We set 10 timesteps for training and testing, which means we use the first 100ms of each event stream.

\subsection{Detailed Experiment Settings}
\subsubsection{Architectures of ResNet}
The basic network architecture of our experiments is ResNet, which has achieved great success in the application of deep ANNs. To convert the network architecture into the SNN version, we replace BN and ReLU with tdBN and MLF units respectively. The weights of the networks are initialized according to the normal distribution \cite{HeK2015}. The detailed architectures of ResNet are summarized in Table~\ref{tab_resnet}.

\begin{table}[h]
\centering
\begin{tabular}{c |c }
    \toprule
    Layer & ($6N+2$)-layer, small/middle/large \\ \midrule
    conv1 & $3\times3,16/32/64$\\ \midrule
    conv2\_x & $\left(\begin{array}{l}3\times3,16/32/64\\3\times3,16/32/64\\\end{array}\right)\times 3$ \\ \midrule
    conv3\_x &$\left(\begin{array}{l}3\times3,32/64/128\\3\times3,32/64/128\\\end{array}\right)\times 3$ \\ \midrule
    conv4\_x &$\left(\begin{array}{l}3\times3,64/128/256\\3\times3,64/128/256\\\end{array}\right)\times 3$ \\ \midrule
    ~ & average pool, 10 (11)-d fc \\ \bottomrule
\end{tabular}
\caption{ Architectures of ResNet. The strides of conv2\_1, conv3\_1, and conv4\_1 are set to 2 for downsampling, and the other strides are set to 1. $N$ represents the number of the blocks stacked for each feature map size.}
\label{tab_resnet}
\end{table}

\subsubsection{Hyper-Parameter}
For all experiments, the delay factors $k_{\tau}$ and the width parameter $a$ of the rectangular function are set to $0.25$ and $1$ respectively. The $1$st level threshold of MLF units is $0.6$ and the $k$-th level threshold is $0.6+k*a$, which means $\textbf{V}_{th}=(0.6,1.6,2.6,...)$.

\subsubsection{Encoding and Decoding}
For the spike encoding method, we follow the approach by \cite{Wu2019,Zheng2021} and take the first layer as the encoding layer. This encoding method can encode a pixel to multiple locations and channels, so it doesn't have much demand on timestep.

For the decoding layer and loss function, we follow the approach by \cite{Zheng2021}. The last layer is considered the decoding layer. 

\subsubsection{Optimizer}
For all experiments, we adopt the stochastic gradient descent (SGD) optimizer with an initial learning rate of $0.1$ and momentum of $0.9$. Weight decay is set to 0.0001 for CIFAR10 and DVS-Gesture, and 0.001 for CIFAR10-DVS. The learning rate is divided by 10 every 40 epochs for one-level firing ($K=1$), and divided by 10 every 35 or 40 epochs for MLF ($K>1$). The batch size of CIFAR10, DVS-Gesture and CIFAR10-DVS are 64, 28, and 32 respectively.

\subsubsection{Computing Infrastructure}
We implement our models in the PyTorch framework version 1.9.0 with GPU acceleration. All of our models are trained on one TITAN X GPU, running 64-bit Linux 4.4.0.

\section{Level Setting}\label{appendix_level}
In our experiments, we set the value of each level threshold of $\textbf{V}_{th}$ with the condition of ensuring that the area of each level $h_k(u_k)$ does not overlap each other, shown in Fig.~\ref{fig_levelsetting}. The interval between ${V_{th}}_{(k+1)}$ and ${V_{th}}_{k}$ is at least one width of $h_k(u_k)$. In addition, we adopt the normalization method of tdBN and normalize the feature maps to $N(0, {(V_{th_1})}^2)$. Therefore, three-level firing is enough to cover most of the sharp features with large values. Fig.~\ref{fig_levelsetting} demonstrates the distribution of the membrane potentials in MLF units ($K=3$). It can be seen that few values of the membrane potentials can exceed the right side of $h_3(u_3)$ with the value of $3.1$. In this case, setting more levels cannot bring significant performance improvements. The average accuracy improvement from three-level firing to four-level firing is no more than $0.1\%$ in our experiments.
\begin{figure}[h]
\centering
\includegraphics[width=0.7\columnwidth]{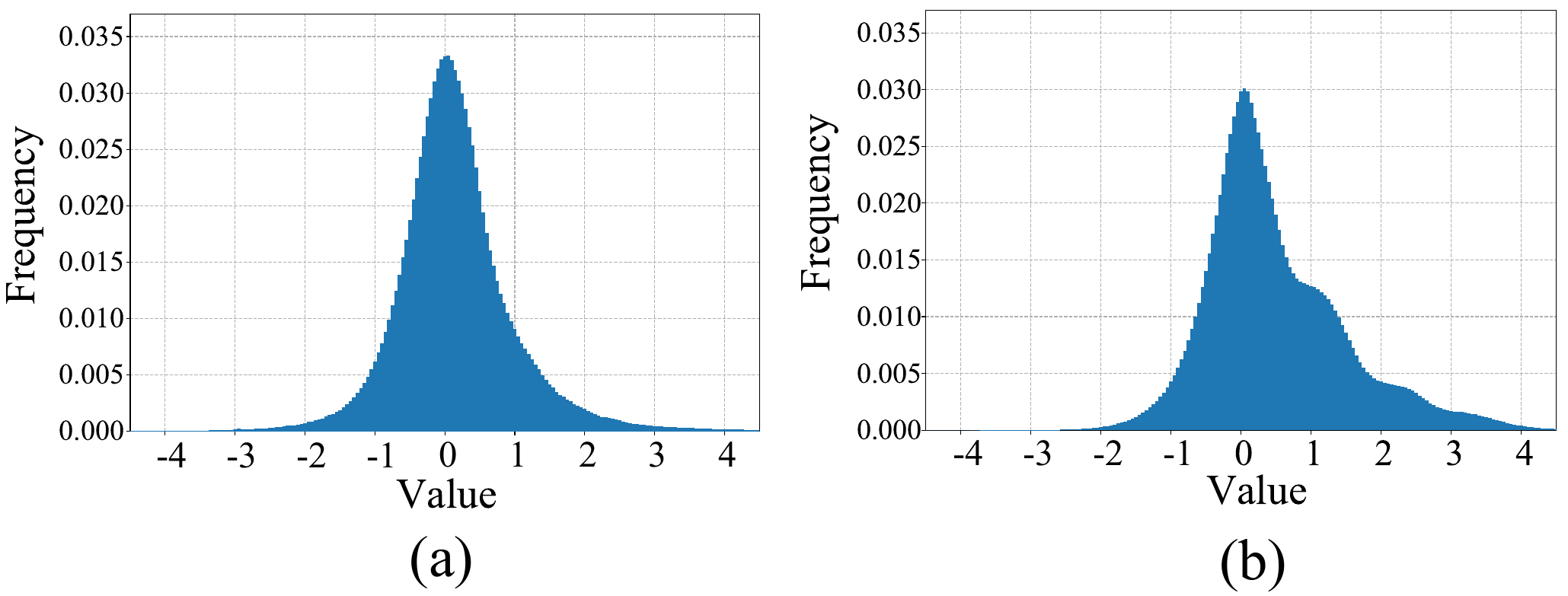} 
\caption{(a) The distribution of the membrane potentials in MLF units ($K=3$). (b) The distribution of the membrane potentials in MLF units ($K=3$) with input added by the shortcut connections.}
\label{fig_levelsetting}
\end{figure}

However, in spiking ResNet, the feature maps normalized by tdBN will be added by the shortcut connections before activation, which will change the distribution. We count the distribution of the membrane potentials in MLF units ($K=3$) with input added by the shortcut connections, as shown in Fig.~\ref{fig_levelsetting}(b). It can be seen that the distribution moves to the right
as a whole and the decline slows down at values 1, 2, and 3, which is in line with the output characteristic of MLF unit with level 3. In this case, some values of membrane potentials can exceed the right side of $h_3(u_3)$. For this, we apply spiking ResNet (20-layer, middle) on CIFAR10 in 2 timesteps with hybrid-level MLF and 3-level MLF. For hybrid-level MLF, we set four-level firing for the feature maps added by shortcut connections and set three-level firing for the feature maps without adding shortcut connections. The performances of hybrid level MLF and 3-level MLF are 92.75\% and 92.71\%, which means the values that exceed the right side of $h_3(u_3)$ have little effect on accuracy.

In summary, MLF with level 3 is enough for the networks if the area of each level $h_k(u_k)$ does not overlap each other, and tdBN is adopted to normalize the feature maps.

\section{The Increasing Gradient}\label{appendix_increase}
For further analysis, we make quantitative statistics for the gradients of convolution weights in spiking ResNet (20-layer, middle) summarized in Table~\ref{tab_grad}. It can be seen that the improvement of gradients in shallow layers is greater than that in deep layers, and the overall improvement of gradients is increased by more than 40\% compared with $K=1$.

\begin{table}[h]
\centering
\begin{tabular}{c c c c c c}
    \toprule
    Method&conv2\_x&conv3\_x&conv4\_x&Imp.\\ \midrule
    $K=1$ & 0.610 & 0.506 & 0.457 & -\\
    $K=2$ & 1.006 & 0.744 & 0.487 & 42.2\%\\
    $K=3$ & 1.051 & 0.754 & 0.493 & 46.1\%\\ \bottomrule
\end{tabular}
\caption{The average convolution weight gradients ($\times10^{-3}$) of the blocks with the same feature map size and the overall improvement of gradients compared with $K=1$.}
\label{tab_grad}
\end{table}

We notice that the gradients in shallow layers are greater than that in deep layers. To explain this phenomenon, we consider one weight $w^n$ at position $(d,i_{k},j_{k},c)$ of the convolution kernel $\textbf{w}^n$ and ignore $T$ and $K$. $d$, $i_{k}$, $j_{k}$ and $c$ denote index of input channel, height, width and output channel respectively. The gradient is as follows
\begin{equation}
\frac{\partial L}{\partial w^n} = \frac{\partial L}{\partial \textbf{u}^{n}}\frac{\partial \textbf{u}^{n}}{\partial w^n}
=\sum_{i=1}^M\sum_{j=1}^M\frac{\partial L}{\partial u_{i,j,d}^{n}}o^{n-1}_{i+i_{k},j+j_{k},c},
\end{equation}
where $M$ is the size of the feature map. As we can see, when $M$ is larger, there are more additional terms, and $|\frac{\partial L}{\partial w^n}|$ tends to be larger. As a result, the gradients in shallow layers with larger feature map sizes are greater than those in deep layers.

\section{Catastrophic Gradient Vanishing}\label{appendix_vanishing}

\subsection{At the Beginning of Training}
If an SNN structure without MLF has no residual structure and has several fully connected layers, one layer may not fire any spikes, and the catastrophic gradient vanishing will easily occur. In this case, there will be no gradient for backpropagation, resulting in non-convergence.

We conduct experiments on VGG16 \cite{Simonyan2014} and CIFAR10 dataset with levels of 1, 2, and 3. As Fig.~\ref{fig_vgg}(b) indicates that ``without MLF'' ($K=1$) cannot enter convergence in the training process as a result of a large number of units falling into dormant unit* in fully-connected layers due to the low membrane potential, as shown in Fig.~\ref{fig_vgg}(a). Especially in the 15th layer, all units are dormant unit*, which means all units will lose the ability of gradient propagation in SD. Consequently, ``without MLF'' cannot enter convergence.
In this case, the hyper-parameter setting and the initialization of network parameters will need to be more elaborate, which greatly limits the generality and usability of the model.

However, MLF can easily solve these problems benefitting from its improved expression ability and its ability to solve gradient vanishing, as shown in Fig.~\ref{fig_vgg}(c), (d).
\begin{figure}[h]
\centering
\includegraphics[width=0.45\columnwidth]{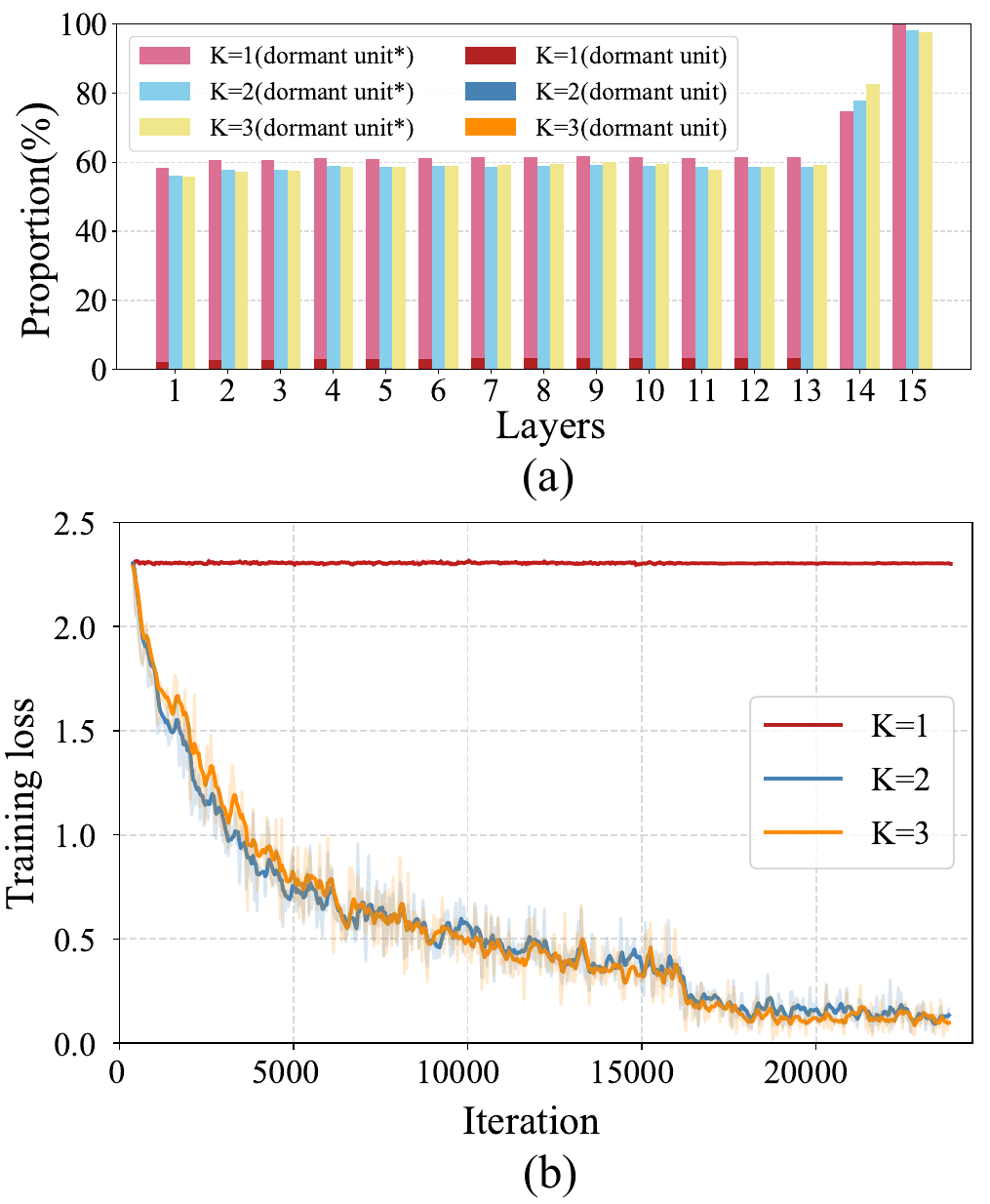} 
\caption{(a) The average proportion of dormant units in each layer of VGG16. (b) The training loss of VGG16. Dormant unit* represents the dormant unit located in the saturation area on the left side of the approximate derivative $h_1(u_1)$.}
\label{fig_vgg}
\end{figure}

\subsection{In the Middle of Training}
When SNNs go deeper, existing direct training methods will not only suffer from network degradation but also the catastrophic gradient vanishing. In the middle of training, the sharp features cannot be further improved after exceeding the firing threshold due to the limited width of $h_1(u_1)$, resulting in firing instability. One layer may not fire any spikes after training a mini-batch, the distribution of which is quite different from the overall distribution. In this case, there will be no gradient for backpropagation, resulting in training crashing.

To demonstrate this catastrophic gradient vanishing, we take ResNet (32-layer, middle) as basic architecture, and conduct three groups of experiments on CIFAR10. The reported deepest network structure (ResNet-SNN) is taken as the baseline. The model of each group is trained with five different random seeds (seed=0, 1, 2, 3, 4). The level of MLF is set to 2.

\begin{figure*}[h]
\centering
\includegraphics[width=1\columnwidth]{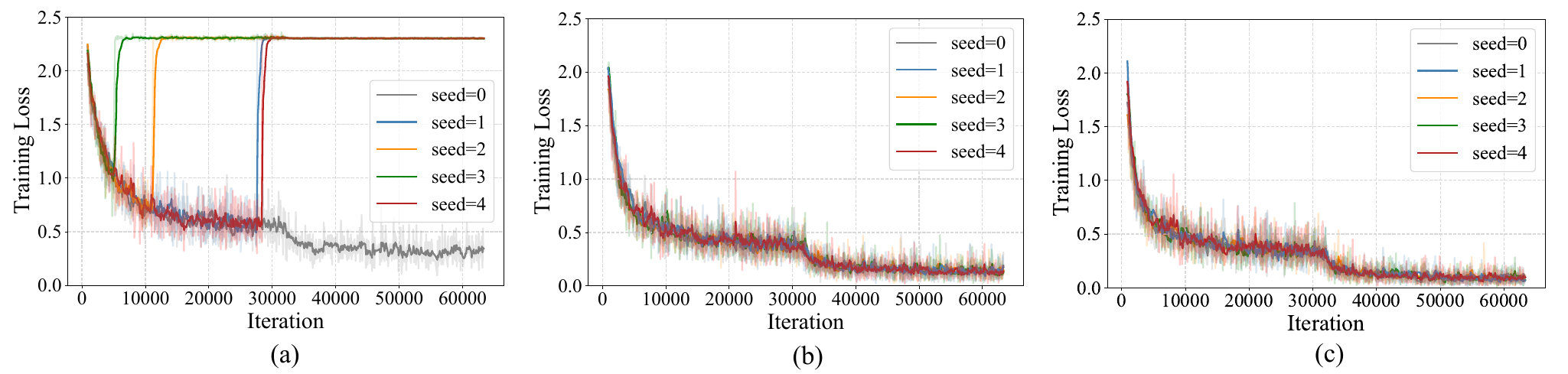} 
\caption{Training loss during the whole training process with five different random seeds. (a) ResNet-SNN without MLF. (b) ResNet-SNN with MLF ($K=2$). (c) Spiking DS-ResNet without MLF.}
\label{fig_stability}
\end{figure*}
As shown in Fig.~\ref{fig_stability}(a), four-fifths of the ResNet-SNN training crash during the training process. When $K=1$, the sharp feature cannot be further improved after exceeding the firing threshold due to the limited width of $h_1(u_1)$, resulting in firing instability. It is likely that after training a mini-batch data, the distribution of which is quite different from the overall distribution, a layer may not fire any spikes. This instability of training will increase when SNNs go deeper. Therefore, in order to avoid training crashing caused by catastrophic gradient vanishing, the hyper-parameter setting and the initialization of network parameters will need to be more elaborate. Our model (whether MLF or spiking DS-ResNet) can ensure the stability of the whole training of very deep SNNs without elaborate hyper-parameter setting and initialization of network parameters, as shown in Fig.~\ref{fig_stability}(b), (c).

\section{Additional Cost of MLF Compared with LIF}
We analyze the cost of MLF from both software and hardware perspective. The FLOPs of LIF and MLF in one layer are as follows
\begin{align}
F_1&=2T(\underbrace{N_{k}^{2}*C_{in}*M^{2}*C_{out}}_{\rm Convolution}+\underbrace{M^{2}*C_{out}}_{\rm LIF}), \\
F_2&=2T(\underbrace{N_{k}^{2}*C_{in}*M^{2}*C_{out}}_{\rm Convolution}+\underbrace{M^{2}*C_{out}*K}_{\rm MLF}),
\end{align}
where $F_1$ and $F_2$ denote the FLOPs of LIF and MLF respectively. $N_{k}$ and $M$ are the size of the kernel and feature map respectively. $C_{in}$ and $C_{out}$ are the number of input and output channels respectively. As we can see, the additional cost of MLF is small, and the complexity of LIF and MLF are both $o(T*K^{2}*C_{in}*M^{2}*C_{out})$.

From software and hardware perspective, we estimate the FLOPs and the spike number of spiking ResNet (20-layer, middle) on CIFAR10 with $T=4$ and $K=3$ during a single inference process. The results show that the FLOPs of LIF and MLF are about $1.3697\times10^{9}$ and $1.3757 \times10^{9}$ respectively, and the spike number of LIF and MLF are about $3.87\times10^{5}$ and $6.75 \times10^{5}$ respectively (the energy consumed to transmit a spike on neuromorphic hardware is only nJ or pJ~\cite{DiehlCook2015}). In summary, compared with LIF, MLF can be applied with little additional cost from software perspective. From hardware perspective, MLF will fire more spikes for communication between neurons.

\end{document}